\DeclareMathOperator*{\argmin}{arg\,min}
\newcommand{\argminprob}[1]{\underset{#1}{\argmin}}
\newcommand{\norm}[2]{\left|\left| #1 \right|\right|_{#2}}
\newcommand{\expect}[2]{\mathbb{E}_{#1}\left[#2\right]}
\newcommand{\expectB}[2]{\underset{#1}{\mathbb{E}}\left[#2\right]}
\newcommand{\real}[0]{\mathbb{R}}
\newcommand{\Natural}[0]{\mathbb{N}}
\newcommand{\NaturalPositive}[0]{\Natural^+}
\newcommand{\bbm}{\begin{bmatrix}}
\newcommand{\ebm}{\end{bmatrix}}
\newcommand{\tuple}[1]{\left\langle #1\right\rangle}
\providecommand{\state}[0]{s}
\providecommand{\stateSpace}[0]{\mathcal{S}}
\providecommand{\action}[0]{a}
\providecommand{\actionSpace}[0]{\mathcal{A}}
\providecommand{\transFnDef}[0]{P}
\providecommand{\cost}[0]{C}
\providecommand{\hor}[0]{T}
\providecommand{\perf}[0]{J}
\providecommand{\policy}[0]{\pi}
\providecommand{\policyClass}[0]{\Pi}
\providecommand{\dist}[0]{\rho}
\providecommand{\initDist}[0]{\dist}
\providecommand{\distState}[1]{\dist_{#1}}
\providecommand{\distStateTime}[2]{\dist_{#1}^{#2}}
\providecommand{\loss}[0]{\ell}
\providecommand{\expert}[0]{\policy^*}
\providecommand{\lclass}[0]{\ell^{\mathrm{cs}}}
\providecommand{\learner}[0]{\hat{\policy}}
\providecommand{\data}[0]{\mathcal{D}}
\providecommand{\dataExp}[0]{\mathcal{D}_\mathrm{exp}}
\providecommand{\F}[0]{\mathcal{F}}
\providecommand{\B}[0]{\mathcal{B}}
\providecommand{\ibe}[0]{\epsilon_{\mathrm{be}}}
\providecommand{\lipm}[0]{\ell^{\mathrm{ipm}}}
\providecommand{\simulator}[0]{\Sigma}
\newcommand{\DAgger}[0]{\textsc{DAgger}\xspace}
\newcommand{\algName}[0]{\textsc{ALICE}\xspace}
\newcommand{\FAIL}[0]{\textsc{Fail}\xspace}
\newcommand{\alice}[0]{\textsc{ALICE}\xspace}
\newcommand{\alipm}[0]{\textsc{Fail}\xspace}
\newcommand{\alcov}[0]{\textsc{ALICE-Cov}\xspace}
\newcommand{\alcovipm}[0]{\textsc{ALICE-Cov-Fail}\xspace}
\newcommand{\easy}[0]{\textsc{Easy}\xspace}
\newcommand{\hard}[0]{\textsc{Hard}\xspace}
\newcommand{\goldilocks}[0]{\textsc{Goldilocks}\xspace}
\newcommand{\brake}[0]{\textsc{Brake}\xspace}
\newcommand{\throttle}[0]{\textsc{throttle}\xspace}
\newtheorem{definition}{Definition}[section]
\newtheorem{corollary}{Corollary}[section]
\newtheorem{theorem}{Theorem}
\newtheorem{problem}{Problem}
\newcommand{\xxnote}[3]{}
  \renewcommand{\xxnote}[3]{\color{#2}{#1: #3}}
\icmltitlerunning{Feedback in Imitation Learning}
\begin{document}

\twocolumn[
\icmltitle{Feedback in Imitation Learning: \\ The Three Regimes of Covariate Shift}
\icmlsetsymbol{equal}{*}
\begin{icmlauthorlist}
\icmlauthor{Jonathan Spencer}{princeton}
\icmlauthor{Sanjiban Choudhury}{aurora}
\icmlauthor{Arun Venkatraman}{aurora}
\icmlauthor{Brian Ziebart}{aurora}
\icmlauthor{J. Andrew Bagnell}{aurora,cmu}
\end{icmlauthorlist}
\icmlcorrespondingauthor{S. Choudhury}{schoudhury@aurora.tech}
\icmlaffiliation{princeton}{Princeton}
\icmlaffiliation{aurora}{Aurora Innovation}
\icmlaffiliation{cmu}{Carnegie Mellon}
\icmlkeywords{Imitation learning}
\vskip 0.3in
]
\printAffiliationsAndNotice{} 

\begin{abstract}
Imitation learning practitioners have often noted that conditioning policies on previous actions leads to a dramatic divergence between ``held out'' error and performance of the learner \emph{in situ}. Interactive approaches \cite{ross2011reduction} can provably address this divergence
but require repeated querying of a demonstrator. 
Recent work identifies this divergence as stemming from a ``causal confound''~\cite{de2019causal,wen2020fighting} in predicting the current action, and seek to ablate causal aspects of current state using tools from causal inference. In this work, we argue instead that this divergence is simply another manifestation of \textit{covariate shift}, exacerbated particularly by settings of \textit{feedback} between decisions and input features. The learner often comes to rely on features that are strongly predictive of decisions, but are subject to strong covariate shift.

Our work demonstrates a broad class of problems where this shift can be mitigated, both theoretically and practically, by taking advantage of a simulator but \textit{without} any further querying of expert demonstration. We analyze existing benchmarks used to test imitation learning approaches and find that these benchmarks are
realizable and simple and thus insufficient for capturing the harder regimes of error compounding seen in real-world decision making problems. We find, in a surprising contrast with previous literature, but consistent with our theory, that naive behavioral cloning provides excellent results. We detail the need for new standardized benchmarks that capture the phenomena seen in robotics problems.
\end{abstract}

\section{Introduction}
\label{sec:introduction}

\begin{figure}
    \centering
    \includegraphics[width=\linewidth]{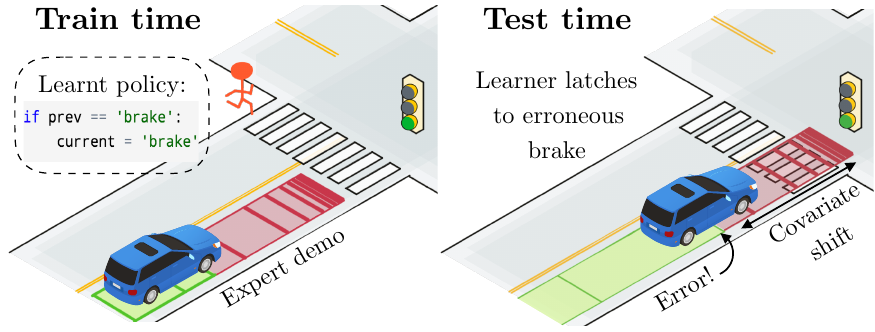}
    \caption{A common example of feedback-driven covariate shift in self-driving. At train time, the robot learns that the previous action (\brake) accurately predicts the current action almost all the time. At test time, when the learner mistakenly chooses to \brake, it continues to choose \brake, creating a bad feedback cycle that causes it to diverge from the expert.}
    \label{fig:covariate_shift}
\end{figure}

Imitation learning (IL) is a method of training robot controllers using demonstrations from human experts. In contrast to, e.g., reinforcement learning (RL), IL controllers can learn to perform complex control tasks ``offline'', without interaction in the environment, and achieve remarkable proficiency with relatively few demonstrations. This speedup comes at a cost, however, since offline learning in robotic control introduces feedback artifacts which must be accounted for.

Consider the problem of training a a self-driving car to stop at a yellow light, as depicted in Figure~\ref{fig:covariate_shift}. When an expert human driver decides whether to \throttle or \brake, they may pay attention to timing of the light, pedestrians, or their previous action. A robot trained to naively imitate this demonstration, however, will latch onto the previous action as it predicts the current action almost all the time. At test time, when the learner inevitably makes a mistake, this mistake feeds into the features for the next timestep. This starts a vicious cycle of errors, as the learner diverges from the expert, not having learned how to recover. 

This driving scenario is a classic example of a \textit{feedback loop}~\cite{bagnell2016feedback,sculley2014machine} between the learners actions and the input features a learner sees. This effects results in a \textit{covariate shift} between training data and the input data a learner encounters when it executes its own policy. This shift leads to often a dramatic divergence between ``hold out`` error and performance of the learner actually executing the policy \textit{in situ}. Feedback-driven covariate shift in the context of imitation learning was first observed by~\cite{Pomerleau-1989-15721} and has recently resurfaced as a often recurring problem in self-driving~\cite{kuefler2017imitating,bansal2018chauffeurnet,codevilla2019exploring}. 
Imitation learner error in settings with explicit feedback is sometimes attributed to a causal confound \cite{de2019causal}, however, we argue that it is simply a more visible case of feedback-driven covariate shift.

A common approach to address covariate shift in imitation learning is to interactively query an expert in states that the learner visits~\cite{ross2011reduction, rossthesis} using the \DAgger algorithm. Consider a class of learner policies $\policy$ that can achieve a one-time-step classification error of $\epsilon$ with respect to the expert's chosen action. Whereas naive behavioral cloning (BC) suffers a classification regret of $O( T^2 \epsilon)$ that grows quadratically in horizon $T$, interactive imitation learning approaches like \DAgger guarantee the optimal $O(T \epsilon)$ regret even in the worst case. However, interactively querying experts is impractical in many cases where human experts are unable to provide feedback online.

Although interactive methods are provably  necessary in the worst case, we posit that there are many problems with more modest demands in a ``Goldilocks Regime'' of moderate difficulty that can be solved \textit{without} the need for an interactive expert. We propose solving those problems in a modified imitation learning setting where we have access to a fixed set of cached expert demonstrations as well as generative access to a high fidelity simulator of the environment. This is similar to inverse reinforcement learning settings that benefit from environment/simulator access during the process of training \cite{abbeel2004apprenticeship,ziebart2008maximum,ho2016generative,finn2016guided} In this setting, we introduce a novel class of algorithms that \textit{provably} address feedback-driven covariate shift.

Our key insight is that, as long as the expert demonstrator visits all states that the learner will visit, then the density ratio between the expert and learner is bounded, $P_\textrm{learner}(\state)/P_\textrm{expert}(\state)\leq C$, we can \textit{correct} for the covariate shift and enjoy a regret of just $O(\epsilon T )$ using just a simulator and cached demonstrations.

Our contributions are the following:
\begin{enumerate}
    \item We introduce a general framework for mitigating covariate shift using cached expert demonstrations, called \algName (Aggregate Losses to Imitate Cached Experts)
    \item We propose a family of loss functions within that framework and analyze their consistency.
    \item We identify specific characteristics needed for better benchmarks in imitation learning.
\end{enumerate}

\section{Preliminaries: The Feedback Problem}
\label{sec:prob_form} 

For simplicity, we consider an episodic finite-horizon Markov Decision Process (MDP) $\tuple{\stateSpace, \actionSpace, \cost, \transFnDef, \initDist, \hor}$ where $\stateSpace$ is a set of states, $\actionSpace$ is a set of actions, $\cost: \stateSpace \rightarrow [0,1]$ is a (potentially unobserved) state-dependent cost function, $\transFnDef: \stateSpace \times \actionSpace \rightarrow \Delta(\stateSpace)$ is the transition dynamics, $\initDist \in \Delta(\stateSpace)$ is the initial distribution over states, and $\hor \in \NaturalPositive$ is the time horizon.
Given a policy $\policy: \stateSpace \rightarrow \Delta(\actionSpace)$, let $\distStateTime{\policy}{t}$ denote the distribution over states at time $t$ following $\policy$. Let $\distState{\policy}$ be the average state distribution $\distState{\policy} = \frac{1}{\hor} \sum_{t=1}^{\hor} \distStateTime{\policy}{t}$. Let $Q^{\policy}$ be the cost-to-go for selecting $\action$ at $\state$ and following $\policy$ thereafter, $Q^{\policy}_t(\state,\action)=\cost(\state)+\expect{\state\sim\transFnDef(\state,\action)}{\cost(\state')}+\sum_{t'=t+1}^{T-1}\expect{\state''\sim\transFnDef(\state_{t'},\policy(\state_{t'}))}{\cost(\state'')}$. Let the (dis)-advantage function $A^{\policy}$ be the difference in cost-to-go between selecting action $\action$ and selecting action $\policy(\action)$, $A^{\policy}(\state,\action)=Q^{\policy}(\state,\action)-Q^{\policy}(\state,\policy(\state))$.
The total cost of executing policy $\policy$ for $\hor$-steps is $\perf(\policy) = \sum_{t=1}^{\hor} \expect{\state_t \sim \distStateTime{\policy}{t}}{\cost(\state_t)} = \hor \expect{\state \sim \distState{\policy}}{\cost(\state)}$.

In imitation learning, we do not observe $\cost(\state)$. Instead, we observe only \emph{cached expert demonstrations} $\dataExp = \{(\state_t^*, \action_t^*)\}$ generated \emph{a priori} by the expert $\expert$, i.e, $\state_t^* \sim \distStateTime{\expert}{t}, \action_t^* \sim \expert(.|\state_t^*)$. The cost $\cost(\state)$ can be constructed as a ``mismatch'' loss relative to the expert (i.e. 0-1 classification loss for discrete actions or squared error in state or action for continuous spaces). Whether $\cost(\state)$ is set by the MDP or relative to the expert, the analysis is the same, and the goal is to train a policy $\policy$ that minimizes the performance difference $\perf(\policy) - \perf(\expert)$ via imitation. When $\cost(\state)$ is a loss relative to the expert, minimizing performance difference is equivalent to minimizing on-policy expert mismatch.

\begin{equation*}
\perf(\policy)-\perf(\expert) = \hor \expect{\state \sim \distState{\policy}}{\mathbbm{1}_{\policy(\state)\neq\expert(\state)}}
\end{equation*}

\paragraph{Feedback Drives Covariate Shift} 

The traditional approach to imitation learning, Behavior Cloning (BC)~\cite{Pomerleau-1989-15721}, simply trains a policy $\policy$ that correctly classifies the expert actions on a cached set of demonstrations: 
\begin{equation}
\label{eq:bc}
\learner = \argminprob{\policy \in \policyClass}\; \expect{ (\state^*,\action^*) \sim \dataExp}{\lclass(\policy(\state^*), \action^*)}. 	
\end{equation}
where $\lclass$ is a classification loss. Here, the best we could hope for is to drive down the training error $\expect{ (\state^*,\action^*) \sim \dataExp}{\lclass(\learner(\state^*), \action^*)} \leq \epsilon$ and ensure a similar validation error on held out data. This is perfectly reasonable in the supervised learning setting where low hold-out error implies low test error.

However, sequential decision making tasks contain inherent \emph{feedback}, where past action $\action_{t-1}$ affect the input to our learner's decision making at time $t$. As shown in the graphical model in Fig.~\ref{fig:feedback}, this can happen through multiple channels. It can happen indirectly through the MDP dynamics where $\action_{t-1}$ affects $\state_t$, and can also happen explicitly when $\policy$ is directly conditioned on previous actions, i.e.  $\policy(\action_t|\state_t,\action_{t-1})$. In fact, dependence on past actions is often essential to ensure model smoothness or hysteresis. 

\begin{figure}
    \centering
    \input{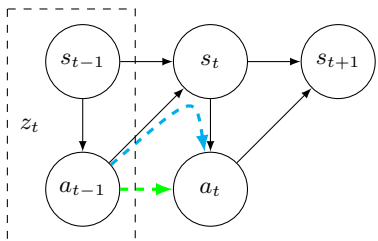}
\begin{tikzpicture}[auto,>=latex,font=\small]

    \tikzset{
    node distance=7mm,
    state/.style={minimum size=28pt,font=\small,circle,draw},
    edge/.style={->},
    }
    \node[state] (st) {$\state_{t}$};
    \node[state,left=of st] (stm1) {$\state_{t-1}$};
    \node[state,right=of st] (stp1) {$\state_{t+1}$};
    \node[state,below=of stm1] (atm1) {$\action_{t-1}$};
    \node[state,below=of st] (at) {$\action_{t}$};

    \draw[edge] (stm1) -- (st);
    \draw[edge] (atm1) -- (st);
    \draw[edge] (st) -- (stp1);
    \draw[edge] (at) -- (stp1);
    \draw[edge] (stm1) -- (atm1);
    \draw[edge] (st) -- (at);
    \draw[cyan,dashed,very thick,->] (atm1) .. controls(-.2,-.4) .. (at);
    \draw[green,dashed,very thick,->] (atm1) -- (at);
    
    \draw[dashed] (-2.7,.7) rectangle (-1,-2.4);
    \node[font=\small] (zt) at (-2.4,-.85) {$z_t$};

\end{tikzpicture}
\vspace{-.5cm}
    \caption{Inherent feedback in sequential decision making tasks. Past action $\action_{t-1}$ affects current action $\action_t$, either indirectly (blue) via MDP dynamics or directly (green) via explicit conditioning.}
    \label{fig:feedback}
\end{figure}

A commonly observed phenomena~\cite{Pomerleau-1989-15721, ross2010efficient,brantley2019disagreement} is that such feedback loops drive \emph{covariate shift}, where the states experienced by the learner $\state_t \sim \distStateTime{\policy}{t}$ diverge significantly from $\dataExp$. Although a learner begins by making an error of just $\epsilon$, those errors drive the learner into novel states rarely demonstrated by the expert where it is even more likely to make mistakes. Distributional shift and compounding errors result in total error $O(T^2 \epsilon)$.  This is formalized below: 
\begin{theorem}[Theorem 2.1 in \cite{ross2011reduction}]
Let $\expect{\state \sim \distState{\pi}}{\lclass(\learner(\state), \expert(\state))} \leq \epsilon$ be the bounded on-policy training error, where $\lclass$ is the 0-1 loss (or an upper bound). We have $\perf(\learner) \leq \perf(\expert) + \hor^2 \epsilon$
\end{theorem}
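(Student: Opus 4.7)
The plan is to apply the \emph{performance difference lemma} (PDL), which rewrites the regret of $\learner$ relative to $\expert$ as an expected advantage under the learner's own state distribution, and then bound that advantage pointwise using the fact that $\cost \in [0,1]$.

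First I would invoke the PDL: a standard telescoping argument along a rollout of $\learner$, adding and subtracting $Q^{\expert}$ at each timestep, gives
$$\perf(\learner) - \perf(\expert) = \hor \cdot \expect{\state \sim \distState{\learner}}{A^{\expert}(\state, \learner(\state))}.$$
Next I would bound the advantage pointwise. Because $\cost(\state) \in [0,1]$ with at most $\hor$ timesteps remaining, $Q^{\expert}(\state, \action) \in [0, \hor]$ for every pair, so $|A^{\expert}(\state, \action)| \leq \hor$; moreover $A^{\expert}(\state, \learner(\state)) = 0$ whenever $\learner(\state) = \expert(\state)$. Hence $A^{\expert}(\state, \learner(\state)) \leq \hor \cdot \mathbbm{1}_{\learner(\state) \neq \expert(\state)} \leq \hor \cdot \lclass(\learner(\state), \expert(\state))$, using that $\lclass$ dominates the 0-1 loss. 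Taking expectation under $\distState{\learner}$ and combining with the PDL identity yields
$$\perf(\learner) - \perf(\expert) \leq \hor^2 \cdot \expect{\state \sim \distState{\learner}}{\lclass(\learner(\state), \expert(\state))} \leq \hor^2 \epsilon.$$

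The main obstacle is really the PDL itself, which is routine but easy to misstate with time indices. A fully equivalent alternative is a direct coupling argument: couple $\learner$ with a copy that follows $\expert$ after the first disagreement, inductively show the probability that they have disagreed by time $t$ is at most $t\epsilon$, and note that each disagreement contributes at most $\hor$ to the remaining cost, summing to $O(\hor^2 \epsilon)$. Either route crucially relies on the observation that a single deviation incurs at most $\hor$ in subsequent cost, which is what drives the quadratic dependence on $\hor$ and motivates the algorithmic contributions that follow.
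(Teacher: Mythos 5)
Your argument is correct for the statement as written here, where the classification error $\epsilon$ is ``on-policy,'' i.e.\ measured under the learner's own state distribution: the performance difference lemma together with the pointwise bound $A^{\expert}(\state,\learner(\state)) \le \hor\cdot\mathbbm{1}_{\learner(\state)\neq\expert(\state)} \le \hor\cdot\lclass(\learner(\state),\expert(\state))$ (valid because $\cost\in[0,1]$ forces $Q^{\expert}\in[0,\hor]$, and the advantage vanishes when the actions agree) immediately gives $\perf(\learner)\le\perf(\expert)+\hor^2\epsilon$. Two remarks. First, the paper never proves this theorem itself --- it is imported from Ross et al.\ (2011) and invoked again in the appendix to ``clip'' the bound of Theorem~\ref{thm:bc_medium} --- but every proof the paper does give (Theorems~\ref{thm:bc_medium}, \ref{thm:alcov}, \ref{thm:alipm}, \ref{thm:alcovipm}) follows exactly your template of PDL plus a pointwise advantage bound, so your route is the natural in-house one; under this reading the statement is simply the $u=\hor$ special case of the restated Theorem~2.2. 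Second, and this is the one substantive caveat: in the original Theorem~2.1 of Ross et al.\ the error is measured under the \emph{expert's} distribution $\distState{\expert}$ (that is what makes it the behavior-cloning bound), and in that case your main PDL step does not apply directly, since the PDL expectation is over $\distState{\learner}$ and no change of measure is available without a density-ratio assumption. That version is handled precisely by the coupling / first-mistake decomposition you sketch as an alternative; there the clean statement is that the total probability of ever deviating over the horizon is at most $\hor\epsilon$ (your per-time claim of ``at most $t\epsilon$'' need not hold if the per-step errors concentrate early, but the cumulative $\hor\epsilon$ bound is all the argument requires), each deviation costing at most $\hor$, which recovers $\hor^2\epsilon$.
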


\cite{ross2011reduction} go on to show that it is possible to achieve the ideal $O(T \epsilon)$\footnote{We are usually interested in a mismatch cost $C(s)=\mathbbm{1}_{\policy(\state)\neq\expert(\state)}$, which in Theorem~\ref{thm:bc_medium_ross} implies $u = 1$, giving us this $O(T \epsilon)$ bound.}  error by interactively querying the expert on states visited by the learner $\state \sim \distState{\pi}$, and minimizing  the same classification loss $\lclass$.

\begin{theorem}[Theorem 2.2 in \cite{ross2011reduction}]
\label{thm:bc_medium_ross}
Let $\expect{\state \sim \distState{\pi}}{\lclass(\learner(\state), \expert(\state))} \leq \epsilon$ be the bounded on-policy training error, and $A^{\expert}(\state, \action) \leq u$ be the bounded (dis)-advantage w.r.t expert $\forall \state, \action$. We have $\perf(\learner) \leq \perf(\expert) + u \hor \epsilon$.
\end{theorem}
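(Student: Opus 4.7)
The plan is to reduce the performance gap to an expected disadvantage under the learner's own state distribution via the Performance Difference Lemma (PDL), and then use the 0-1 training-error hypothesis to bound that disadvantage. Concretely, I would establish the identity
\begin{equation*}
\perf(\learner) - \perf(\expert) \;=\; \sum_{t=1}^{\hor} \expect{\state_t \sim \distStateTime{\learner}{t}}{A^{\expert}(\state_t, \learner(\state_t))} \;=\; \hor \cdot \expect{\state \sim \distState{\learner}}{A^{\expert}(\state, \learner(\state))},
\end{equation*}
where the second equality uses the definition $\distState{\learner} = \tfrac{1}{\hor}\sum_t \distStateTime{\learner}{t}$. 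The first equality is the standard cost-minimizing form of the Kakade--Langford PDL, proved by writing $\perf(\learner)-\perf(\expert)$ as a telescoping sum in which at step $t$ the learner is followed for one action and the expert is followed thereafter; each telescope term is exactly $\expect{\state_t \sim \distStateTime{\learner}{t}}{A^{\expert}(\state_t,\learner(\state_t))}$.

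Next, I would observe the pointwise inequality
\begin{equation*}
A^{\expert}(\state, \learner(\state)) \;\leq\; u \cdot \mathbbm{1}_{\learner(\state) \neq \expert(\state)},
\end{equation*}
which follows because the dis-advantage vanishes whenever $\learner(\state) = \expert(\state)$ (by definition $A^{\expert}(\state,\expert(\state)) = 0$) and is bounded by $u$ by hypothesis. Taking expectation under $\distState{\learner}$ and combining with the PDL identity yields
\begin{equation*}
\perf(\learner) - \perf(\expert) \;\leq\; u \hor \cdot \expect{\state \sim \distState{\learner}}{\mathbbm{1}_{\learner(\state) \neq \expert(\state)}} \;\leq\; u \hor \epsilon,
\end{equation*}
where the last step invokes the assumed on-policy training-error bound (noting that the 0-1 indicator is upper-bounded by $\lclass$).

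The only nontrivial step is the PDL itself; everything else is definitional. I expect that to be the main obstacle in a fully self-contained writeup, since it requires carefully handling the finite-horizon telescoping and the fact that $Q^{\expert}$ at horizon $\hor$ terminates cleanly. However, this is completely standard and is essentially proved in the same manner as in Ross et al.\ (2011). The crucial conceptual point that distinguishes this bound from the quadratic BC bound is that the $\epsilon$ here is measured under $\distState{\learner}$ rather than $\distState{\expert}$, so no distribution-mismatch factor of $\hor$ appears, yielding the linear-in-$\hor$ regret.
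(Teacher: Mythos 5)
Your proof is correct and follows essentially the same route the paper (and Ross et al.) use throughout: the Performance Difference Lemma applied with the expert as comparator, the pointwise bound $A^{\expert}(\state,\learner(\state)) \leq u\,\mathbbm{1}_{\learner(\state)\neq\expert(\state)}$, and the fact that $\lclass$ upper bounds the 0-1 loss under the learner's own state distribution. The paper itself cites this result without reproving it, but its appendix proofs of Theorems~\ref{thm:bc_medium} and~\ref{thm:alcov} use exactly this PDL-plus-disadvantage template, so nothing in your argument diverges from the paper's approach.
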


However, querying the expert online is impractical in many settings \cite{laskey2017comparing}, requiring a human to provide cognitively challenging low-latency open-loop controls. This inspires a search for methods which achieve similar bounds \textit{without} the need for an interactive demonstrator, asking the following question:

\begin{problem}
\label{prob:cached}
Can we achieve $O(T \epsilon)$ error without querying the expert online, i.e., using only the cached expert $\dataExp$?
\end{problem}

\section{Confusion on Causality and Covariate Shift}
\label{sec:causal}

We now take a slight detour to review instances where feedback causes real problems in imitation learning for self-driving, and clarify a widespread confusion between covariate shift and causality. A common observation in all these instances is that while past actions are strongly correlated with future actions, it often leads to a latching effect where once the robot begins to brake, it continues braking. ~\cite{muller2006off} first note such correlations this with steering actions, but more recently~\cite{kuefler2017imitating, bansal2018chauffeurnet,de2019causal} note this with braking actions and ~\cite{codevilla2019exploring} note a correlation between speed and desired acceleration.

In an important paper,~\cite{de2019causal} look to understand the problem of feedback through the lens of causal reasoning \cite{pearl2016causal, spirtes2000causation}. They note that, practically, the most severe repercussions come from features that enable the learner to condition almost directly on previous actions.\footnote{It is difficult, however, to formalize this intuition because in any interesting imitation learning problem, actions do indeed affect the states the learner must operate with.}
The authors propose that:
\textquote{This situation presents a give-away symptom of causal misidentification: access to more information leads to worse generalization performance in the presence of distributional shift. Causal misidentification occurs commonly in natural imitation learning settings, especially when the imitator’s inputs include history information.}
and that their contribution is:
\textquote{We propose a new interventional causal inference approach suited to imitation learning.}

Fig.~\ref{fig:feedback} shows the graphical model under consideration.  The authors claim $z_t = [\state_{t-1}, \action_{t-1}]$ is a confounder between the independent variable $\state_t$ and the dependent variable $\action_t$. However, $z_t$ is fully observed. From ~\cite{pearl2016causal} (Theorem 3.2.2), $\state_t$ and $\action_t$ are not confounded iff $P( \action_t | \mathrm{do} (\state_t) ) = P(\action_t | \state_t)$, i.e. whenever the observationally witnessed association between them is the same as the association that would be measured in a controlled experiment, with $\state_t$ randomized. In the experimental setups they considered, despite explicit conditioning on previous actions we failed to find true confounders as all variables are observed.

\begin{figure*}[ht]
    \centering
    \includegraphics[width=0.8\linewidth]{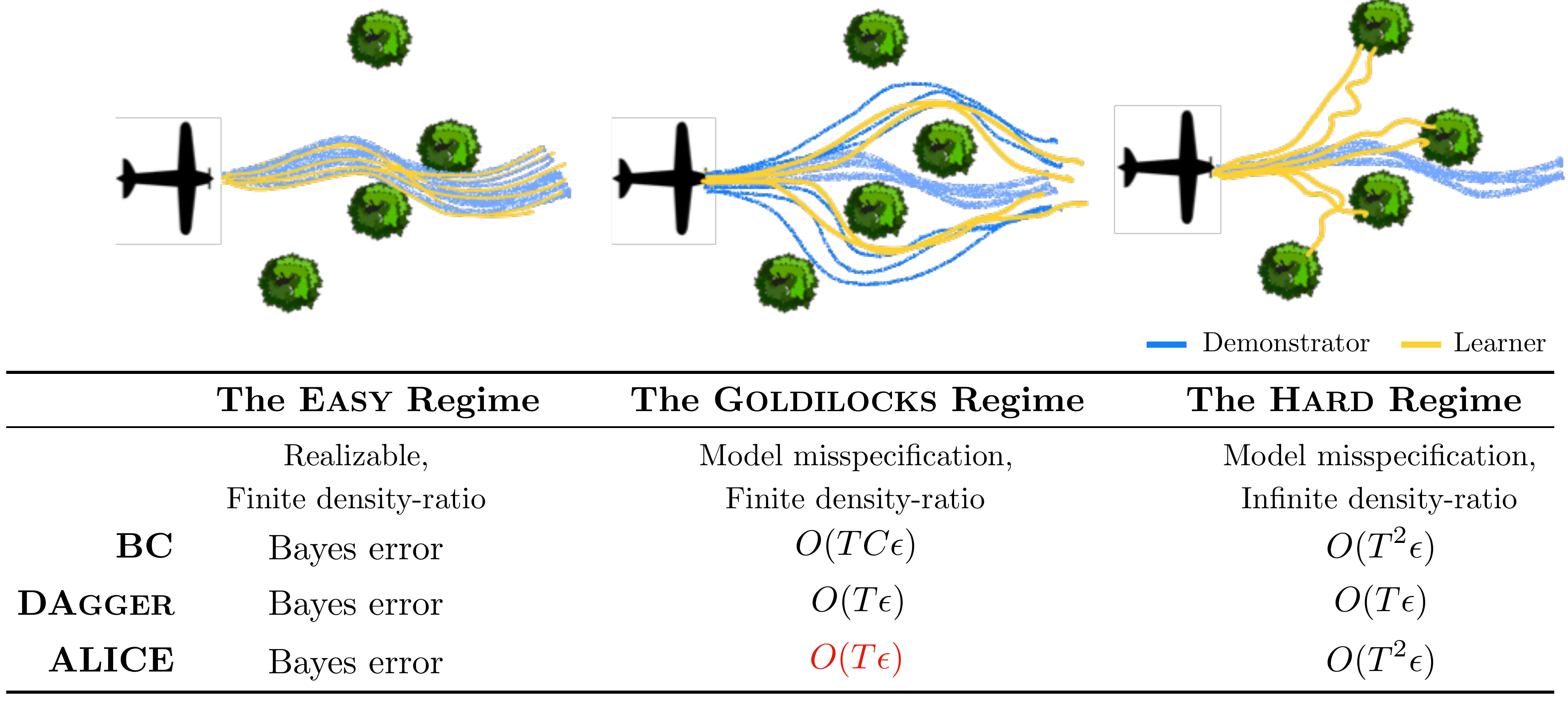}
    \caption{Spectrum of feedback driven covariate shift regimes. Consider the case of training a UAV to fly through a forest using demonstrations (blue). In \easy regime, the demonstrator is realizable, while in the \goldilocks and \hard regime, the learner (yellow) is confined to a more restrictive policy class. While model mispecification usually requires interactive demonstrations, in the \goldilocks regime, \alice achieves $O(T\epsilon)$ without interactive query.
    }
    \label{fig:spectrum}
\end{figure*}

When all variables are observed, under infinite data regime and full support (all modes excited), behavior cloning / MLE is consistent. Indeed \cite{zhang2020causal} similarly prove that \textquote{if the expert and learner share the same policy space, then the policy is always imitable}. While the authors of \cite{de2019causal} claim \textquote{We define causal misidentification as the phenomenon whereby cloned policies fail by misidentifying the causes of expert actions}, we posit that this is not a causal issue but rather one of finite data regime and/or model misspecification. The learner prediction error $\epsilon$ concentrates unevenly as the learner comes to rely on a features that are strongly predictive of actions, but create strong feedback loops that lead to covariate shift.

Finally, if we alter the graphical model to have an unobserved casual confounders,~\cite{de2019causal} is correct to note that this would manifest as a covariate shift and lead to poor performance even in the infinite data limit. However, their proposed approach requires expert intervention similar to \DAgger (although is can be much more sample efficient), which makes it difficult to apply in our setting. Instead, we propose to measure the induced covariate shift and directly adjust for it.

\section{Feedback Driven Covariate Shift Spectrum}

We return to Problem~\ref{prob:cached} -- achieve $O(T\epsilon)$ error without repeatedly querying the expert. We adopt the large-sample, \textit{reduction}-style analysis of \cite{beygelzimer2008machine, ross2010efficient, ross2011reduction} and for simplicity consider the infinite data regime to keep the focus squarely on understanding the error-compounding problem so common in real-world applications (Section~\ref{sec:causal}). 

While the theoretical analysis demonstrates that BC must incur $O(T^2 \epsilon)$ error in the worst-case, practitioners have often noted settings where behavior cloning is effective~\cite{bojarski2016end}. This suggests that there maybe varying regimes of difficulty in the imitation learning problem that call for different algorithmic assumptions and approaches as shown in Fig.~\ref{fig:spectrum}

\subsection{The \easy Regime: Realizable}
In the simplest case, the expert is realizable $\expert \in \policyClass$. We can simply drive the classification error to $\epsilon = 0$, recovering the expert policy. The prescription in this regime is the easiest practically and often quite powerful: collect as much data as possible and ensure realizability with a sufficiently broad policy class.

Crucially, we note theoretically and demonstrate below empirically that the benchmarks used in recent imitation learning literature \cite{ho2016generative,barde2020adversarial} all fall into the \easy regime. Without restrictions on model class, we find that the commonly used benchmarks are well solved by naive behavioral cloning and fail to capture real-world issues \cite{sculley2014machine, Pomerleau-1989-15721, kuefler2017imitating, bansal2018chauffeurnet, codevilla2019exploring}. We suggest the methodological issues that may have led to prior beliefs that these benchmarks captured error-compounding and suggest improvements for future benchmarks.

\subsection{The \hard Regime: Model Misspecification, Infinite Density-ratio} This is the hardest case, where the expert is not realizable, $\expert \notin \policyClass$, and does not visit every state, thus leading to infinite learner-expert density ratio. As \cite{ross2011reduction} note, when a fundamental limit (of $\epsilon$ classification error) is imposed on how accurately we can learn the expert's policy whether due to partial observability, optimization limitations, or regularization, a compounding error of $O(T^2 \epsilon)$ is simply inevitable without repeated access and interaction with the demonstrator. Interaction with the demonstrators enables the learner to achieve the best possible $O( T \epsilon)$ error. This comes fundamentally from the fact that there may simply be states that are \emph{never} visited under the demonstrator policy and thus there is effectively no way to recover if an error leads our learner to such states. As such, interactive methods like \DAgger \cite{ross2011reduction} are inevitably required to achieve high performance.

\subsection{The  \goldilocks  Regime: Model Misspecification, Finite Density-ratio} 
Consider now a regime where the expert is not realizable, $\expert \notin \policyClass$, and there thus exists some inevitable classification error (captured crudely via the $\epsilon$ bound),
\textit{but} for which there is \emph{sufficient exploration} in the data collected by observing the demonstrator that we can hope to learn good ``recovery'' behavior. We quantify that notion of exploration by saying that the demonstrator visits all states with some small probability, thus bounding the learner-demonstrator density ratio $\norm{{\distState{\learner}}/{\distState{\expert}}}{\infty} \leq C$. We denote this the \textit{Goldilocks} regime as we speculate this model is ``just right'' to capture much of the error compounding difficulty seen in real-world imitation learning.

We begin by noting that due to misspecification, a naive BC can still only provide sub-optimal performance $O(TC\epsilon)$.

\begin{theorem}[BC in Goldilocks regime]
\label{thm:bc_medium}
Let $\learner$ be the learnt policy such that $\expect{ (\state^*,\action^*) \sim \dataExp}{\lclass(\learner(\state^*), \action^*)} \leq \epsilon
$. For bounded density ratio $ \norm{{\distState{\learner}}/{\distState{\expert}}}{\infty} \leq C$ and bounded (dis)-advantage $A^{\expert}(\state, \action) \leq u \; \forall (s,a)$, we have
\begin{equation}
    J(\learner) \leq J(\expert) + \min(T C u \epsilon, T^2 \epsilon)
\end{equation}
\end{theorem}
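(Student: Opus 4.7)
The plan is to establish the two bounds $TCu\epsilon$ and $T^2\epsilon$ separately and combine them by taking the minimum. The $T^2\epsilon$ half is the classical behavior cloning bound, already restated as Theorem~1 in this excerpt, so I would simply invoke it after noting that when $\lclass$ is a $0$--$1$ upper bound and $\dataExp$ is drawn from $\distState{\expert}$, the hypothesis $\expectB{(\state^*,\action^*)\sim\dataExp}{\lclass(\learner(\state^*),\action^*)}\leq\epsilon$ implies $\expectB{\state\sim\distState{\expert}}{\mathbbm{1}[\learner(\state)\neq\expert(\state)]}\leq\epsilon$.

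For the tighter $TCu\epsilon$ side, I would start with the performance difference lemma, which after summing over the horizon gives
\begin{equation*}
J(\learner) - J(\expert) = T\cdot\expectB{\state\sim\distState{\learner}}{A^{\expert}(\state,\learner(\state))}.
\end{equation*}
The key pointwise observation is that the (dis)advantage vanishes when learner and expert agree and is at most $u$ otherwise, so $A^{\expert}(\state,\learner(\state))\leq u\cdot\mathbbm{1}[\learner(\state)\neq\expert(\state)]$; this uses both $A^{\expert}(\state,\expert(\state))=0$ and the hypothesized uniform bound $u$ on the disadvantage.

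Next, I would handle the mismatch between $\distState{\learner}$ (the measure appearing above) and $\distState{\expert}$ (the measure under which training error is controlled) by importance weighting with the bounded Radon--Nikodym derivative,
\begin{equation*}
\expectB{\state\sim\distState{\learner}}{\mathbbm{1}[\learner(\state)\neq\expert(\state)]}\leq C\cdot\expectB{\state\sim\distState{\expert}}{\mathbbm{1}[\learner(\state)\neq\expert(\state)]}\leq C\epsilon.
\end{equation*}
Chaining the three displayed inequalities gives $J(\learner)-J(\expert)\leq TCu\epsilon$, and the theorem follows by taking the minimum with the $T^2\epsilon$ bound.

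The main obstacle is conceptual rather than technical: it is tempting to try to control training error against $\distState{\learner}$ directly, which would require on-policy rollouts and thus interactive queries. The density-ratio assumption $C<\infty$ is exactly what lets us sidestep this by paying only a factor of $C$ to change measure back to $\distState{\expert}$, where the cached dataset already controls the error. The delicate bookkeeping is verifying that no reduction step inflates the horizon factor beyond $T$; in particular, applying the density ratio \emph{after} bounding $A^{\expert}$ by the action-mismatch indicator is what avoids the implicit $C=T$ union-over-time bound that produces the $T^2$ behavior in the classical BC analysis.
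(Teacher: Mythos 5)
Your proposal is correct and follows essentially the same route as the paper: the performance difference lemma, bounding the expert (dis)-advantage by $u$ times the action-mismatch/classification loss, changing measure from $\distState{\learner}$ to $\distState{\expert}$ via the bounded density ratio $C$, and clipping with the classical $T^2\epsilon$ behavior-cloning bound of \cite{ross2011reduction}. The only difference is cosmetic ordering (you bound $A^{\expert}$ by $u$ times the indicator before applying the density ratio, while the paper changes measure first), which does not affect the argument.
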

\begin{proof}
See appendix \ref{proof:bc}
\end{proof}

In this regime, the problem becomes one of deploying finite model resources to lead to as good as possible performance.
We approach Problem~\ref{prob:cached} from a perspective of covariate-shift correction and provide, for the first time to our knowledge, an approach that achieves $O(T \epsilon)$ without interactively querying the expert.

\section{Approach}

We speculate that the \goldilocks regime is relatively common in real-world applications and that in the large data limit the need to re-query an expert may be relatively uncommon.
Although an interactive expert is sufficient to address this issue, we believe that only a set of \textit{cached expert demonstrations} is necessary. We thus consider the use of classical \textit{covariate-shift} mitigation strategies, though the IL setting introduces a complication. Due to feedback we have a subtle chicken-or-egg problem: high-performance covariate shift mitigation strategies rely on samples from the distribution of inputs (states) from the test distribution. However, the test distribution here is \textit{directly} an effect of decisions made earlier by the policy.

We show how this can be addressed both for non-stationary and for stationary policies. For non-stationary policies (Sec~\ref{sec:approach:forward}), we use sequential forward training of successive policies. For stationary policies (Sec~\ref{sec:approach:iterative}), we use an iterative approach which refines both policy and mitigation strategy until they stabilize.

We present \alice (Aggregate Losses to Imitate Cached Experts), a family of algorithms that train a policy to imitate cached expert while countering or adjusting for covariate shift. Abstractly, all variants of \alice optimize:
\begin{equation}
\label{eq:alice:absract}
\learner = \argminprob{\policy \in \policyClass}\;  \loss(\policy, \dataExp, \Sigma)  	
\end{equation}

where the loss $\loss(.)$ uses both cached demonstrations $\dataExp$ and a simulator\footnote{This demonstration + simulator requirement is equivalent to the setting of, \textit{e.g.} GAIL \cite{ho2016generative}, where the learner has access to the environment but does \textbf{not} require access again to the demonstrator after the initial collection.}
$\simulator: \policy \mapsto \distState{\policy}(s)$ to build the induced density $\distState{\learner}(s)$. There exist several choices for the loss $\loss(.)$ of which we analyze three, and show that they all lead to an $O(T \epsilon)$ bound under varying assumptions.

\subsection{Forward Training Non-Stationary Policies}
\label{sec:approach:forward}
A simple resolution to the chicken-or-egg problem in \eqref{eq:alice:absract} is to train a different policy $\learner_t$ for every timestep. This leads to a simple, recursive recipe: at each time $t$, roll-in trained policies $\learner_1, \dots, \learner_{t-1}$ in the simulator $\Sigma$ to construct $\distStateTime{\learner}{t}(s)$ and solve
\begin{equation}
\label{eq:onpolicy}
\learner_{t} = \argminprob{\policy \in \policyClass}\;  \loss_t(\policy, \dataExp, \distStateTime{\learner}{t})  	
\end{equation}

While training non-stationary policies is at times impractical, we use this setting to theoretically analyze various loss functions $\loss(.)$, deferring to Section~\ref{sec:approach:iterative} for a simple online variant with no fixed horizon.

\subsection{Family of ALICE Loss Functions}
\label{sec:approach:loss}
We analyze two different losses -- reweigh expert demonstrations by estimating the learner-expert density ratio, and match next state moments of learner-expert distribution. We also analyze a third loss that combines both ideas to be doubly robust. 

\textbf{Loss 1 (\alcov): Reweigh by Density Ratio}

A classic approach to covariate shift mitigation \cite{shimodaira2000improving} is to reweigh samples by the density ratio of the target (learner) to the observed (expert) distribution. 
\begin{equation}
\begin{split}
\label{eq:alcov}
\loss_t(\policy, \dataExp,& \distStateTime{\learner}{t}) = \\ &\expectB{\state^*_t, \action^*_t \sim \dataExp}{ \left(\frac{\distStateTime{\learner}{t}(s^*_t)}{\distStateTime{\expert}{t}(s^*_t)} \right) \lclass(\pi(\state_t^*), \action_t^*) ) }
\end{split}
\end{equation}

where $\lclass(.)$ is the plain classification loss. In practice, the density ratio is not known and must be estimated $\hat{r}(s) \approx \left(\frac{\distStateTime{\learner}{t}(s^*_t)}{\distStateTime{\expert}{t}(s^*_t)} \right)$\footnote{In practice, since the density ratio cannot be estimated perfectly, we often apply an exponential weighting $\hat{r}(s_t)^\alpha$, $\alpha \in[0,1]$ which brings the estimate closer to uniform weighting and reins in unreasonably large values, tuning $\alpha$ through validation.}. \alcov is the simplest variant that requires only that the ratio estimate be bounded to guarantee $O(T \epsilon)$ 

\begin{theorem}[\alcov]
\label{thm:alcov}
Let $\learner$ be the learned policy produced by \alcov such that $\loss_t(\learner, \dataExp, \distStateTime{\learner}{t}) \leq \epsilon$. Let $\hat{r}(s_t)$ be the density ratio estimate such that $ \expect{\state^*_t \sim \dataExp}{ \hat{r}(s_t) - \frac{\distStateTime{\learner}{t}(s_t)}{\distStateTime{\expert}{t}(s_t)}} \leq \gamma $. Let $A^{\expert}(\state, \action) \leq u \; \forall (s,a)$ be a bound on the (dis)-advantage w.r.t expert. We have
\begin{equation}
    J(\learner) \leq J(\expert) + T u (\epsilon + \gamma)
\end{equation}
\end{theorem}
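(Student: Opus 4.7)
The plan is to bound the performance gap via the standard performance-difference lemma, convert it into an on-policy misclassification rate weighted by the advantage bound, and then exchange the on-policy measure for the off-policy expert measure using the (approximate) density ratio used in $\lipm$... sorry, in $\loss_t$.

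\paragraph{Step 1: Performance difference.} I would start from the identity (\emph{cf.} Kakade \& Langford's performance-difference lemma, applied in the finite-horizon setting used throughout the paper)
\begin{equation*}
\perf(\learner)-\perf(\expert) \;=\; \sum_{t=1}^{\hor}\expectB{\state_t\sim\distStateTime{\learner}{t}}{A^{\expert}(\state_t,\learner(\state_t))}.
\end{equation*}
Since $A^{\expert}(\state,\expert(\state))=0$ and $A^{\expert}\le u$, each summand is at most $u\cdot\Pr_{\state_t\sim\distStateTime{\learner}{t}}[\learner(\state_t)\neq\expert(\state_t)]$. If I write $\lclass$ as an upper bound on the 0-1 loss, this gives
\begin{equation*}
\perf(\learner)-\perf(\expert)\;\le\; u\sum_{t=1}^{\hor}\expectB{\state_t\sim\distStateTime{\learner}{t}}{\lclass(\learner(\state_t),\expert(\state_t))}.
\end{equation*}

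\paragraph{Step 2: Change of measure.} The integrand on the right is an expectation under $\distStateTime{\learner}{t}$, but the data in $\dataExp$ is sampled from $\distStateTime{\expert}{t}$. Rewriting by importance sampling,
\begin{equation*}
\expectB{\state_t\sim\distStateTime{\learner}{t}}{\lclass(\learner,\expert)}=\expectB{\state_t\sim\distStateTime{\expert}{t}}{\tfrac{\distStateTime{\learner}{t}(\state_t)}{\distStateTime{\expert}{t}(\state_t)}\,\lclass(\learner,\expert)}.
\end{equation*}
Subtracting and adding $\hat r(\state_t)$ inside the expectation splits this quantity into $\loss_t(\learner,\dataExp,\distStateTime{\learner}{t})$ plus an error term
\begin{equation*}
\expectB{\state_t\sim\distStateTime{\expert}{t}}{\Big(\tfrac{\distStateTime{\learner}{t}(\state_t)}{\distStateTime{\expert}{t}(\state_t)}-\hat r(\state_t)\Big)\,\lclass(\learner,\expert)}.
\end{equation*}

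\paragraph{Step 3: Bounding the estimation residual.} Since $\lclass\in[0,1]$, the residual is at most $\expectB{\state_t\sim\distStateTime{\expert}{t}}{\big|\hat r(\state_t)-\tfrac{\distStateTime{\learner}{t}}{\distStateTime{\expert}{t}}\big|}\le \gamma$ by the hypothesis on $\hat r$ (reading the stated inequality as an absolute-error bound, as is standard in the covariate-shift literature). Combining with the optimization guarantee $\loss_t(\learner,\dataExp,\distStateTime{\learner}{t})\le\epsilon$ for each $t$ yields $\expectB{\state_t\sim\distStateTime{\learner}{t}}{\lclass}\le\epsilon+\gamma$.

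\paragraph{Step 4: Summing over the horizon.} Plugging back into Step 1 and summing the $\hor$ terms gives $\perf(\learner)-\perf(\expert)\le \hor\, u(\epsilon+\gamma)$, which is the claimed bound.

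The only real subtlety is Step 3: the theorem's hypothesis on $\hat r$ has no absolute value, so I would first clarify (and will note in the proof) that the bound must be read as $\expectB{\state^*_t\sim\dataExp}{|\hat r(\state_t)-\distStateTime{\learner}{t}/\distStateTime{\expert}{t}|}\le\gamma$, since otherwise the importance-weight residual cannot be controlled in the correct direction. Everything else is a mechanical application of the performance-difference lemma together with an importance-sampling swap — the same template as the \AggreVaTe/\DAgger-style analyses in \cite{ross2011reduction}, but with the on-policy expert queries replaced by reweighting through $\hat r$.
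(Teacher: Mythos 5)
Your proposal is correct and follows essentially the same route as the paper's proof: the performance-difference lemma, an importance-sampling swap to the expert distribution, and a split into the $\hat r$-weighted loss term (bounded by $\epsilon$) plus a ratio-estimation residual (bounded by $\gamma$), differing only in whether $A^{\expert}\le u\,\lclass$ is invoked before or after the change of measure. Your observation that the hypothesis on $\hat r$ must be read as an absolute-error (or at least sign-consistent) bound is well taken — the paper's own proof implicitly uses exactly that reading.
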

\begin{proof}
See appendix ~\ref{proof:alcov}
\end{proof}
\begin{corollary}
Let $\cost(\state)$ be the classification loss $\lclass(\pi(s), \expert(s))$ with respect to expert and $\learner$ be a policy produced as in Theorem.~\ref{thm:alcov}. In this case
\begin{equation}
    J(\learner) \leq J(\expert) + T (\epsilon + \gamma)
\end{equation}
\begin{proof}
For $\cost(\state) = \lclass(\pi(s), \expert(s))$, we have $u=1$.
\end{proof}
\end{corollary}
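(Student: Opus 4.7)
The plan is to invoke Theorem~\ref{thm:alcov} directly and verify that its disadvantage bound specialises to $u = 1$ when the cost is the 0-1 classification loss, so that the factor $T u (\epsilon + \gamma)$ collapses to $T(\epsilon + \gamma)$.

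The only non-trivial step is establishing that $A^{\expert}(\state, \action) \leq 1$ uniformly when $\cost(\state) = \lclass(\pi(\state), \expert(\state))$. I would reason as follows. Interpreted as a mismatch cost, $\cost$ vanishes whenever the acting policy agrees with the expert, so the expert's own rollout incurs zero cost at every state: $V^{\expert}(\state) = 0$ for all $\state$. Consequently $Q^{\expert}(\state, \expert(\state)) = 0$, and $Q^{\expert}(\state, \action)$ is at most the single step of mismatch $\lclass(\action, \expert(\state)) \in \{0,1\}$, since after that one step we roll in the expert and accumulate nothing further. Therefore $A^{\expert}(\state, \action) = Q^{\expert}(\state, \action) - Q^{\expert}(\state, \expert(\state)) \leq 1$, giving $u = 1$.

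Substituting $u = 1$ into the conclusion of Theorem~\ref{thm:alcov} immediately yields $J(\learner) \leq J(\expert) + T(\epsilon + \gamma)$. The only (minor) obstacle is being careful with the semantics of $\cost(\state)$: because its definition implicitly depends on the acting policy via $\pi$, one must interpret it as a mismatch cost that is identically zero along any expert rollout, rather than as a fixed state-only cost. Once that interpretation is fixed, the corollary is an immediate specialisation of the previous theorem.
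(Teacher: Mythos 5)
Your proposal is correct and matches the paper's argument: the paper's entire proof is the observation that the expert-mismatch cost forces $u=1$ in Theorem~\ref{thm:alcov}, and your derivation that $Q^{\expert}(\state,\expert(\state))=0$ while $Q^{\expert}(\state,\action)$ is at most the single step of mismatch simply spells out why that bound on the dis-advantage holds. Substituting $u=1$ into the theorem's conclusion is exactly how the paper obtains the corollary.
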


We incur a penalty $u$ in Theorem.~\ref{thm:alcov} because we are matching expert actions via $\lclass(.)$ instead of matching values, \textit{i.e.} minimizing the expert dis-advantage function. We look at how to do this next.

\textbf{Loss 2 (\alipm): Match next-state moments}

Since our ultimate objective is to match the expert's value function, we can perhaps do so more effectively by matching moments of that function rather than matching individual actions. The loss function introduced in the \FAIL algorithm \cite{Sun-2019-118906} does exactly that, matching moments of the expert and learner's next state distribution $\rho_{t+1}$ for non-stationary policies. We reintroduce that loss here within our general framework\footnote{The \FAIL setting is observation only and non-stationary. Since in our setting we observe expert actions, the adversary function class can be expanded to $\F:\stateSpace\times\actionSpace\rightarrow\real$, searching instead for the state-action value function. This requires a stationary target optimization policy $\policy$, and that we uniformly sample both $\action_t$ and $\action_{t+1}$ and apply appropriate reweighting}, and combine it with covariate shift correction in the next section. Please refer to \cite{Sun-2019-118906} for the full implementation details of \alipm.

Assuming value functions $V^*_{t+1}(s)$ belong to a class of moments $\F_{t+1} : \stateSpace \rightarrow \real$, the learner can try to match the expert's moments of the next state. An example of a metric that captures deviation from matching moments is the integral probability metric, where an adversary searches for a moment function $f \in \F_{t+1}$ that maximizes learner-expert moment mismatch. Given a learner distribution $\rho_t$ and expert distribution $\rho^*_{t+1}$, we can define the following IPM metric:
\begin{equation}
\begin{aligned}
d(\policy | \rho_t, \rho^*_{t+1}) = \max_{f \in \F_{t+1}} &
\expectB{\substack{ \state_t \sim \rho_t \\ \action_t \sim \policy \\ \state_{t+1} \sim P(.|\state_t, \action_t) }}{f(\state_{t+1})} \\
 &- \expectB{s^*_{t+1} \sim \rho^*_{t+1}}{ f(\state^*_{t+1})}
\end{aligned}
\end{equation}
where the first term is the the expected moment on states obtained by rolling out $\pi$ and the second term is the expected expert moment. \alipm defines the loss as:
\begin{equation}
\begin{aligned}
\loss_t(\policy, \dataExp, \distStateTime{\learner}{t}) = d(\policy | \distStateTime{\learner}{t}, \dataExp)
\end{aligned}
\end{equation}

While \alipm does \textit{not} require a bounded density ratio to remove error compounding, it does require a different, at times stronger condition -- the notion of one-step recoverability
\begin{definition}[One-step Recoverability]
\label{def:recoverability}
For any state distribution $\rho_t$, there exists a policy $\pi$ that bounds $d(\pi | \rho, \dataExp) \leq \epsilon$.
\end{definition}
This basically requires that no matter what the current distribution, the learner can recover in a single time-step to drive the loss to $\epsilon$. Without this condition, divergences at earlier time-step could be unrecoverable and hence compound over time leading to $O(T^2 \epsilon)$.
When considering any IL method where $\cost(\state)$ and corresponding value function $V^*(s)$ are inherent to the MDP (rather than relative to the expert as in $\cost(\state)=\lclass(\pi,\state)$), recoverability becomes a fundamental requirement. If the environment includes hard branches between high and low reward paths, bounds on lost reward ($\epsilon$ here, $u$ in \DAgger) becomes arbitrarily large since optimal actions are useless after an initial mistake. These requirements of recoverability can be thought of as imposing a limit on the reward ``branchiness'' of the environment, which we require here only in expectation. This is illustrated in Appendix~\ref{sec:recoverability}. With this condition, we have:

\begin{theorem}[\alipm ]
\label{thm:alipm}
Let $\learner$ be the learnt policy produced by \alipm such that $\loss_t(\learner, \dataExp, \distStateTime{\learner}{t}) \leq \epsilon$, assuming the expert is one-step recoverable, and $\ibe$ be the Inherent Bellman Error (IBE). We have:
\begin{equation}
    J(\learner) \leq J(\expert) + 2 T (\epsilon + \ibe)
\end{equation}
\end{theorem}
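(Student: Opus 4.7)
The plan is to upper bound $J(\learner)-J(\expert)$ by a telescoping sum of per-step IPM gaps, each controlled by the \alipm constraint together with the one-step recoverability assumption on the expert. Starting from the standard performance-difference identity $J(\learner)-J(\expert)=\sum_{t=0}^{T-1}\expect{\state_t\sim\distStateTime{\learner}{t},\,\action_t\sim\learner}{A^{\expert}_t(\state_t,\action_t)}$ and rewriting each advantage as a one-step difference of $V^{\expert}_{t+1}$, the sum becomes $\sum_{t}\bigl(\expect{\distStateTime{\learner}{t+1}}{V^{\expert}_{t+1}}-\expect{\mu_t}{V^{\expert}_{t+1}}\bigr)$, where $\mu_t$ is the one-step expert rollout initialized from $\distStateTime{\learner}{t}$. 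This reduces the proof to bounding, at every $t$, the difference of expected next-state values induced by $\learner$ versus $\expert$ when both start from the learner's own state distribution at time $t$.

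To connect these differences to the \alipm loss, which measures IPM against the \emph{fixed} expert marginal $\distStateTime{\expert}{t+1}$ rather than against $\mu_t$, I insert and subtract $\expect{\distStateTime{\expert}{t+1}}{V^{\expert}_{t+1}}$ in each summand, splitting it into (A) $\expect{\distStateTime{\learner}{t+1}}{V^{\expert}_{t+1}}-\expect{\distStateTime{\expert}{t+1}}{V^{\expert}_{t+1}}$ and (B) $\expect{\distStateTime{\expert}{t+1}}{V^{\expert}_{t+1}}-\expect{\mu_t}{V^{\expert}_{t+1}}$. By the definition of inherent Bellman error, $V^{\expert}_{t+1}$ is within $\ibe$ (in sup norm) of some $\tilde f\in\F_{t+1}$, so in each piece I may replace $V^{\expert}_{t+1}$ by $\tilde f$ at additive cost $\ibe$. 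Piece~(A) is then at most $\loss_t(\learner,\dataExp,\distStateTime{\learner}{t})\le\epsilon$ by the theorem's hypothesis, and piece~(B) is the same IPM functional evaluated against the expert rolled from $\distStateTime{\learner}{t}$, which by one-step recoverability of the expert (Definition~\ref{def:recoverability}) is also at most $\epsilon$. Each of the $T$ summands is therefore bounded by $2(\epsilon+\ibe)$, delivering $J(\learner)\le J(\expert)+2T(\epsilon+\ibe)$.

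The main obstacle is exactly the mismatch between what the algorithm controls---closeness of $\distStateTime{\learner}{t+1}$ to the fixed trajectory-level marginal $\distStateTime{\expert}{t+1}$---and the quantity the performance-difference lemma demands---closeness of $\distStateTime{\learner}{t+1}$ to the one-step expert rollout $\mu_t$ starting from an \emph{off-policy} distribution $\distStateTime{\learner}{t}$. Routing the comparison through the fixed marginal via the triangle inequality doubles the per-step penalty (hence the factor $2$ in the bound), and one-step recoverability is precisely what licenses the return leg: without it, even the true expert could drift arbitrarily far from $\distStateTime{\expert}{t+1}$ when initialized off-policy, and the non-compounding $O(T\epsilon)$ conclusion would collapse back to $O(T^2\epsilon)$. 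The remaining work---tracking the $\ibe$ slack when swapping $V^{\expert}_{t+1}$ for its best approximator in $\F_{t+1}$, and checking that the forward-training construction of Section~\ref{sec:approach:forward} makes the per-step hypothesis $\loss_t(\learner,\dataExp,\distStateTime{\learner}{t})\le\epsilon$ legitimate---is routine bookkeeping in the infinite-data reduction-style framework used throughout the paper.
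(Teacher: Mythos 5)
Your skeleton coincides with the paper's up to the key estimate: both use the performance difference lemma, rewrite each advantage as a one-step difference of expectations of $V^*_{t+1}$, and insert the fixed expert marginal $\distStateTime{\expert}{t+1}$ to split each summand into your pieces (A) and (B); piece (A) is handled exactly as in the paper (it is precisely the per-step \alipm loss, $\le\epsilon$ by hypothesis). The genuine gap is in piece (B). You bound it by invoking ``one-step recoverability of the expert'' to claim that the expert, rolled one step from the off-policy distribution $\distStateTime{\learner}{t}$, lands within $\epsilon$ (in IPM over $\F_{t+1}$) of $\distStateTime{\expert}{t+1}$. Definition~\ref{def:recoverability} does not give you this: it asserts only that for any roll-in distribution \emph{there exists some policy} whose one-step rollout is $\epsilon$-close to the cached expert's next-state distribution --- i.e., it is what makes the hypothesis $\loss_t(\learner,\dataExp,\distStateTime{\learner}{t})\le\epsilon$ achievable by the learner at every $t$ --- not that the expert policy itself funnels an arbitrary distribution back onto its own marginal. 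The paper's proof never uses recoverability inside the argument at all; it bounds (B) by pulling $V^*_{t+1}$ back to time $t$ through the expert Bellman operator $\B^*_t$ and noting that the resulting time-$t$ IPM, $\max_{f\in\F_t}\E_{\state\sim\distStateTime{\expert}{t}}[f(\state)]-\E_{\state\sim\distStateTime{\learner}{t}}[f(\state)]$, is exactly the loss already controlled at the adjacent training step (again $\le\epsilon$), at the price of $2\ibe$ for projecting $\B^*_t V^*_{t+1}$ into $\F_t$.

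Relatedly, your use of the Inherent Bellman Error misstates Definition~\ref{def:ibe}: $\ibe$ is the worst-case error of projecting the pullback $\B^*_t g$ of $g\in\F_{t+1}$ onto $\F_t$, not a statement that $V^*_{t+1}$ lies within $\ibe$ of $\F_{t+1}$. The paper assumes $V^*_{t+1}\in\F_{t+1}$ outright (the realizability of value functions as ``moments'' stated before the IPM definition), so piece (A) incurs no $\ibe$ cost, and $\ibe$ enters only through the Bellman pullback in piece (B). Indeed, if you keep the paper's assumption $V^*_{t+1}\in\F_{t+1}$, your route would give $2T\epsilon$ with no $\ibe$ term at all, which signals that the $\ibe$ dependence in the theorem is not being derived from the stated definitions in your argument. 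To repair your proof you would need either the materially stronger assumption that the expert itself is the recovering policy in Definition~\ref{def:recoverability} (together with your nonstandard reading of $\ibe$ as value-class approximation error), or to close piece (B) the way the paper does: previous step's loss plus $2\ibe$ via the Bellman projection into $\F_t$.
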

\begin{proof}
We require small IBE to account for the richness of the chosen function class $\F$ and its ability to realize $V^*$. See appendix ~\ref{proof:alipm} for proof and definition of IBE.
\end{proof}

\textbf{Loss 3 (\alcovipm): Doubly robust loss}

When we combine Loss 1 and 2, we still require a version of recoverability (captured completely in the loss bound $\epsilon$), but we can eliminate dependence on dis-advantage and IBE. We consider the following loss:
\begin{equation}
\begin{split}
 \loss_t(\policy, & \dataExp, \distStateTime{\learner}{t})= \\ &\expectB{\state^*_t, \action^*_t \sim \dataExp}{\left(\frac{\distStateTime{\learner}{t}(s^*_t)}{\distStateTime{\expert}{t}(s^*_t)}\right) \lipm(\policy(\state^*), \action^* )}
\end{split}
\end{equation}
where 
\begin{equation}
\begin{aligned}
\lipm(\policy(\state^*), \action^* ) = \max_{f \in \F_{t+1}} 
& \expectB{ 
\substack{\action_t \sim \policy(.|\state^*_t) \\ \state_{t+1} \sim P(.|\state^*_t, \action_t)}}{ f(\state_{t+1})}\\ - &\expectB{ 
\substack{\action^*_t \sim \policy(.|\state^*_t) \\ \state^*_{t+1} \sim P(.|\state^*_t, \action^*_t)}}{ f(\state^*_{t+1})}
\end{aligned}
\end{equation}
Comparing with \alcov \eqref{eq:alcov}, we swap out the classification loss $\lclass(.)$ with next state IPM loss $\lipm(.)$. We have the following performance guarantee:
\begin{theorem}[\alcovipm]
\label{thm:alcovipm}
Let $\learner$ be the learnt policy produced by \alcovipm such that $\loss_t(\learner, \dataExp, \distStateTime{\learner}{t}) \leq \epsilon$. 
Let $\hat{r}(s)$ be the density ratio estimate such that $ \expect{\state^*_t \sim \dataExp}{ \hat{r}(s) - \frac{\distStateTime{\learner}{t}(s)}{\distStateTime{\expert}{t}(s)}} \leq \gamma $. Let $A^{\expert}(\state, \action) \leq u \; \forall (s,a)$ be a bound on the (dis)-advantage w.r.t expert. We have
\begin{equation}
    J(\learner) \leq J(\expert) + T (\epsilon + u \gamma)
\end{equation}
\end{theorem}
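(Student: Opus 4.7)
The plan is to combine a standard performance-difference decomposition with a two-step ``doubly robust'' bound per time step: first swap the state distribution using the (estimated) density ratio, then upper-bound the resulting advantage term by the IPM loss with the specific moment $V^\expert_{t+1}$.

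I would begin by invoking the performance difference lemma in the form
\[
J(\learner)-J(\expert) = \sum_{t=1}^{T}\expectB{\substack{s_t\sim\distStateTime{\learner}{t}\\ a_t\sim\learner(\cdot|s_t)}}{A^{\expert}_t(s_t,a_t)},
\]
so it suffices to show that each per-step expected advantage is at most $\epsilon+u\gamma$. For a fixed $t$, perform a change of measure from $\distStateTime{\learner}{t}$ to $\distStateTime{\expert}{t}$ using the true density ratio $\distStateTime{\learner}{t}/\distStateTime{\expert}{t}$, then replace it by $\hat{r}$; the replacement error is bounded, via Hölder together with $|A^{\expert}|\le u$ and the assumption $\expect{s^*_t\sim\dataExp}{|\hat{r}-\distStateTime{\learner}{t}/\distStateTime{\expert}{t}|}\le\gamma$, by $u\gamma$. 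This leaves the reweighted expert-distribution term $\expect{s^*_t\sim\distStateTime{\expert}{t}}{\hat{r}(s^*_t)\,A^{\expert}(s^*_t,\learner(s^*_t))}$.

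Next I would expand the advantage as
\[
A^{\expert}(s,\learner(s)) = \expectB{a\sim\learner,\,s'\sim P(\cdot|s,a)}{V^{\expert}_{t+1}(s')}-\expectB{a^*\sim\expert,\,s''\sim P(\cdot|s,a^*)}{V^{\expert}_{t+1}(s'')},
\]
the $C(s)$ terms canceling. Assuming $V^{\expert}_{t+1}\in\F_{t+1}$ (the same realizability assumption that hides $\ibe$ in the clean statement here, as it does in Theorem~\ref{thm:alipm}), plug $f=V^{\expert}_{t+1}$ into the supremum defining $\lipm$. Since picking one specific $f$ is dominated by the max, and taking expectation of $a^*\sim\expert(\cdot|s^*)$ reconstructs the second term of $A^{\expert}$, we obtain the pointwise-in-$s^*$ bound $\expect{a^*\sim\expert}{A^{\expert}(s^*,\learner(s^*))}\le \expect{a^*\sim\expert}{\lipm(\learner(s^*),a^*)}$. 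Multiplying by the nonnegative weight $\hat{r}(s^*)$ and taking the expectation over $s^*\sim\distStateTime{\expert}{t}$ (equivalently $(s^*,a^*)\sim\dataExp$) identifies the right-hand side with the optimized loss $\loss_t(\learner,\dataExp,\distStateTime{\learner}{t})\le\epsilon$.

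Putting the two inequalities together gives the per-step bound $\epsilon+u\gamma$, and summing over $t=1,\ldots,T$ yields $J(\learner)\le J(\expert)+T(\epsilon+u\gamma)$. The main obstacle, and the reason the statement is cleaner than Theorem~\ref{thm:alipm}, is the handling of realizability: one must assume the expert value function lies in $\F_{t+1}$, or otherwise carry an additional inherent Bellman error term through the IPM step; the density-ratio reweighting itself is the ``easy'' half, just needing $|A^{\expert}|\le u$ to convert the $L^1$ ratio error $\gamma$ into the $u\gamma$ penalty.
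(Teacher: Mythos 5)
Your proposal is correct and follows essentially the same route as the paper's own proof: a performance-difference decomposition, a change of measure split into the estimated ratio plus an error term bounded by $u\gamma$ via $|A^{\expert}|\le u$, and an expansion of the advantage as a next-state value difference that is dominated by the $\lipm$ maximization, identifying the reweighted term with the loss bound $\epsilon$. Your explicit remark that $V^{\expert}_{t+1}\in\F_{t+1}$ (or else an inherent-Bellman-error-type term) is needed when passing to the supremum is a fair observation; the paper's proof makes exactly that substitution implicitly.
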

\begin{proof}
See appendix ~\ref{proof:alcovipm}.
\end{proof}
If the density ratio estimate is perfect, $\gamma = 0$, the bound is $O(T \epsilon)$, i.e. we have no dependence on $u$.

\subsection{Iterative Training Stationary Policies}
\label{sec:approach:iterative}
We now provide a more practical variant of \alice that trains a single stationary policy. We use the same principle in \cite{ross2011reduction} to reduce the chicken-or-egg problem to an iterative no-regret online learning~\cite{shalev2008mind}, for example, via dataset aggregation. 

Algorithm~\ref{alg:framework} provides an outline. It takes in a dataset of cached expert demonstrations $\dataExp$ and a simulator. At iteration $i$, it rolls in the current learner $\learner^i$ via the simulator to construct the learner state distribution $\state_t \sim \distStateTime{\learner^i}{t}$ for any timestep. This in turn is used to compute a dataset of losses across various timesteps. This dataset is then aggregated with previous datasets and a new learner $\policy^{i+1}$ is computed on the whole dataset. For strongly convex losses, Algorithm~\ref{alg:framework} is a no-regret online algorithm, and hence achieves sub-linear regret against the best policy in hindsight. With some regularization\footnote{To achieve no-regret we might either use a strongly convex approximation of the classification loss, or we might require regularization as in online gradient descent \cite{zinkevich2003online} or methods like \textit{weighted majority} \cite{kolter2007dynamic} for arbitrary, non-convex losses.}, we can recover an algorithm $O(T \epsilon)$ with any of the 3 losses.

\begin{algorithm}[!htbp]
   \caption{\algName}
   \label{alg:framework}
\begin{algorithmic}
   \STATE {\bfseries Input:} Cached expert demonstrations $\dataExp = \{(\state_t^*, \action_t^*)\}$, Simulator $\simulator:\pi\rightarrow\distState{\pi}$
   \STATE Initialize dataset $\data \gets \emptyset$
   \STATE Initialize learner $\learner^1$ to any policy in $\policyClass$
   \FOR{$i=1$ {\bfseries to} $N$}
   \STATE Sample states $\state_t \sim \distStateTime{\learner^i}{t}$ by running $\learner^i$ in simulator $\simulator$
   \STATE Compute a dataset of losses $\data_i = \{ \loss_t(\policy, \dataExp, \distStateTime{\learner^i}{t}) \}$.
   \STATE Aggregate datasets: $\data \gets \data \bigcup \data_i$
   \STATE Train learner $\learner^{i+1}$ on aggregated $\data$
   \ENDFOR
   \STATE {\bfseries Return} best $\learner^{i}$ on validation
\end{algorithmic}
\end{algorithm}

\section{Evaluation and Benchmarks}

In this section we take the opportunity to raise broader concerns with the benchmarks commonly used in imitation learning and identify necessary attributes of better benchmarks.

We performed a set experiments using what is now the standard IL approach of training an RL agent to provide a set of demonstrations $\dataExp$, then running IL using $\dataExp$. We report the average reward of the expert dataset and behavioral cloning (BC) in Table~\ref{tab:BC_benchmarks_small}, and share more baselines and results in Appendix \ref{experiments-expanded}. In the stationary policy setting, BC is the initialization point for \alice and a lower bound on our performance. Unfortunately (or fortunately), as a lower bound for our algorithm, BC leaves no room for improvement because the standard benchmarks fall squarely within the realizable ($\epsilon=0$) setting. It is indeed challenging to find settings for which BC does \textit{not} perform well, leaving little room to showcase relative performance gains. That naive BC performs better than or equivalent to sophisticated IL methods on many of these benchmarks is not necessarily a critique of the IL algorithms, which may be useful for combating covariate shift in real-world applications. Rather, it is an acknowledgment of the inadequacy of the benchmark environments themselves to demonstrate the $O(T^2)$ error compounding suffered by BC, commonly observed both theoretically and experimentally \cite{ross2010efficient}.

\begin{table}
\label{tab:BC_benchmarks_small}
\centering
\begin{tabular}{|l|c|c|c|}
    \hline
    \textbf{Environment} & \textbf{Expert} & \textbf{BC} \\ \hline 
    CartPole & $500\pm 0$ & $500\pm 0$ \\ \hline
    Acrobot & $-71.7\pm 11.5$ & $-78.4\pm 14.2$ \\ \hline
    MountainCar & $-99.6\pm10.9$  &  $-107.8\pm 16.4$ \\ \hline
    Hopper & $3554\pm 216$ & $3258\pm 396$ \\ \hline
    Walker2d & $5496\pm 89$ & $5349\pm 634  $ \\ \hline
    HalfCheetah & $4487\pm 164$ & $4605\pm 143$ \\ \hline
    Ant & $4186\pm 1081$ & $3353\pm 1801$ \\ \hline
\end{tabular}

\caption{\textit{Behavioral Cloning} performance on common control benchmarks for 25 expert trajectories, averaged over 7 random seeds. In nearly every case, BC performs within the expert's error margin and often just as well as more sophisticated methods. See appendix for more results and implementation specifics.}
\end{table}


\textbf{Difficulty} - We claim these benchmarks are ``too easy'' in the sense that they do not exhibit the feedback-driven covariate shift and error compounding observed when implementing IL in real world scenarios. This is evidenced by the fact that in many published works \cite{barde2020adversarial}, naive BC outperforms many sophisticated ``state-of-the-art'' IL algorithms, as reported by the authors themselves. In cases where authors report weak performance by BC \cite{ho2016generative,de2019causal}, we have found that repeating those experiments with stronger optimization and different model classes (validating our claim of ``model misspecification'') produces reasonable results.




\textbf{Proposed Benchmark Requirements} - Ideally, IL researchers would develop their own realistic and repeatable set of \textit{IL-centric} benchmarks with the following properties:
\begin{enumerate}
    \item A repeatable sequential decision making environment (\textit{à la} OpenAI Gym, MuJoCo, ...)
    \item A pre-tuned (reward-optimal) expert policy or fixed set of demonstrations, used in common by all researchers
    \item A standard success metric of on-policy expert mismatch: $\expect{\state \sim \distState{\policy}}{\mathbbm{1}_{\policy(\state)\neq\expert(\state)}}$
    \item Scalable difficulty that is nontrivial yet feasible (\textit{i.e.} distinct expert/learner model classes, but adequate expert coverage).
\end{enumerate}
As noted by \cite{zhang2020causal}, imitability or difficulty in IL is closely linked to observability, and partial observability removes realizability and injects often substantial covariate shift into IL problems \cite{wen2020fighting}. However, reducing observability in benchmark problems must be done with care, as it can quickly make the problem so challenging as to become unsolvable even for algorithms such as \DAgger, which achieve the theoretic upper bound in many settings. In pure IL, the upper and lower performance bounds are not \textit{Expert} and \textit{Random}, but carefully optimized \DAgger and BC.

Rather than avoiding the fact that carefully optimized BC performs remarkably well in many existing benchmarks or contriving ways to handicap BC, the community can benefit from a focus on IL-centric benchmarks that exhibit the characteristics of more difficult real world problems.

\section{Discussion}

In this paper we identify the root cause of some classic challenges in imitation learning and present a general framework for addressing them. Specifically, we notice how feedback in sequential decision making tasks causes covariate shift in standard imitation learning, and show how we can correct for that shift under certain assumptions.

Although this \textit{Goldilocks} regime of moderate difficulty problems has been oft been noted in the real world, an active area of future work is to find and develop benchmarks which are not easily solvable by BC yet still feasible. This is also part of a broader call to ourselves and to the broader community of researchers to develop IL-centric benchmarks which are consistent yet flexible.

Within that \textit{Goldilocks} regime we outlined a general IL framework and introduced three specific loss functions to counter and correct covariate shift using density ratio correction and next state moment matching. Those solutions relied on a bounded density-ratio and a notion of one-step recoverability. Single step recoverability is often a very strong requirement, and we would like to consider approaches that can manage with less demanding requirements.

\section*{Acknowledgements}

The authors thank Hal Daume for thoughtful conversation on when causal confounding might play
a significant role in imitation learning and structured prediction and Sergey Levine and Dinesh Jayaraman for discussions on identification of causal models and the role for causality in the compounding-error phenomena.

\bibliography{references}
\bibliographystyle{icml2020}

\newpage
\onecolumn
\appendix
\section{Appendix}

\subsection{Proof of Theorem~\ref{thm:bc_medium} (Behavior Cloning)}
\label{proof:bc}
\begin{proof}
We begin by bounding the expected on-policy dis-advantage of learner w.r.t expert
\begin{equation}
\begin{aligned}
    \expect{\state_t \sim \distStateTime{\policy}{t}}{A^{\expert}(\state_t, \policy(\state_t))}
    &= \expect{\state_t \sim \distStateTime{\expert}{t}}{ \frac{\distState{\policy}(\state_t)}{\distState{\expert}(\state_t)}  A^{\expert}(\state_t, \policy(\state_t))} \\
    &\leq \norm{\frac{\distStateTime{\policy}{t}(.)}{\distStateTime{\expert}{t}(.)}}{\infty} \expect{\state_t \sim \distStateTime{\expert}{t}}{A^{\expert}(\state_t, \policy(\state_t))} \\
    &\leq \norm{\frac{\distStateTime{\policy}{t}(.)}{\distStateTime{\expert}{t}(.)}}{\infty}       
    \norm{ A^{\expert}(., .) }{\infty} 
    \expectB{\substack{\state^*_t \sim \distStateTime{\expert}{t} \\ \action^*_t \sim \expert(.|\state^*_t)}}{ \lclass( \policy(\state^*_t) , \action^*_t ) } \\
    &\leq C u \epsilon \\
\end{aligned}
\end{equation}
where the third inequality follows from the fact that $\lclass(.)$ upper bounds the $0-1$ loss.

Using the Performance Difference Lemma
\begin{equation}
\begin{aligned}
J(\policy) &= J(\expert) + \sum_{t=1}^T \expect{\state_t \sim \distStateTime{\policy}{t}}{A^{\expert}(\state_t, \policy(\state_t))} \\
&\leq J(\expert) + \sum_{t=1}^T C u \epsilon \\
&\leq J(\expert) + T C u \epsilon \\
\end{aligned}
\end{equation}

We can clip the bound above if we assume that costs are bounded $[0,1]$, using Theorem 2.1 from ~\cite{ross2011reduction}
\begin{equation}
    J(\policy) \leq J(\expert) + T^2 \epsilon 
\end{equation}
\end{proof}

\subsection{Proof of Theorem~\ref{thm:alcov} (\alcov)}
\label{proof:alcov}
\begin{proof}
We begin by bounding the expected on-policy dis-advantage of learner w.r.t expert
\begin{equation}
\begin{aligned}
    \expect{\state_t \sim \distStateTime{\policy}{t}}{A^{\expert}(\state_t, \policy(\state_t))}
    & = \expect{\state_t \sim \distStateTime{\expert}{t}}{ \frac{\distState{\policy}(\state_t)}{\distState{\expert}(\state_t)}  A^{\expert}(\state_t, \policy(\state_t))} \\
    & = \expect{\state_t \sim \distStateTime{\expert}{t}}{ \hat{r}(\state_t) A^{\expert}(\state_t, \policy(\state_t))} + \expect{\state_t \sim \distStateTime{\expert}{t}}{ \left(\frac{\distState{\policy}(\state_t)}{\distState{\expert}(\state_t)} - \hat{r}(\state_t)\right)  A^{\expert}(\state_t, \policy(\state_t))} \\
    & \leq u \expectB{ \substack{ \state_t \sim \distStateTime{\expert}{t} \\ \action^*_t \sim \expert(. | \state_t)} }{ \hat{r}(\state_t) \lclass(\policy(\state_t), \action^*_t)} + u \; \expect{\state_t \sim \distStateTime{\expert}{t}}{ \left(\frac{\distState{\policy}(\state_t)}{\distState{\expert}(\state_t)} - \hat{r}(\state_t)\right)} \\
    & \leq u \; \loss(\policy, \dataExp, \distStateTime{\policy}{t}) + u \; \gamma \\
    & \leq u (\epsilon + \gamma) \\
\end{aligned}
\end{equation}

Using the Performance Difference Lemma
\begin{equation}
\begin{aligned}
J(\policy) &= J(\expert) + \sum_{t=1}^T \expect{\state_t \sim \distStateTime{\policy}{t}}{A^{\expert}(\state_t, \policy(\state_t))} \\
&\leq J(\expert) + \sum_{t=1}^T u (\epsilon + \gamma) \\
&\leq J(\expert) + T u (\epsilon + \gamma) \\
\end{aligned}
\end{equation}

\end{proof}

\subsection{Proof of Theorem~\ref{thm:alipm} (\alipm)}
\label{proof:alipm}
\begin{proof}
We begin by bounding the expected on-policy dis-advantage of learner w.r.t expert
\begin{equation}
\begin{aligned}
\label{eq:adv:ipm}
    \expect{\state_t \sim \distStateTime{\policy}{t}}{A^{\expert}(\state_t, \policy(\state_t))} 
    &= \expectB{\substack{\state_t \sim \distStateTime{\policy}{t} \\ \action_t \sim \policy(.|\state_t) \\ \state_{t+1} \sim P(. | \state_t, \action_t)}}{V^*_{t+1}(\state_{t+1})} - \expectB{\substack{\state_t \sim \distStateTime{\policy}{t} \\ \action^*_t \sim \expert_t(.|\state_t) \\ \state_{t+1} \sim P(. | \state_t, \action^*_t)}}{V^*_{t+1}(\state_{t+1})}  \\
    &= \expectB{\substack{\state_t \sim \distStateTime{\policy}{t} \\ \action_t \sim \policy(.|\state_t) \\ \state_{t+1} \sim P(. | \state_t, \action_t)}}{V^*_{t+1}(\state_{t+1})} - \expectB{\substack{\state^*_t \sim \distStateTime{\expert}{t} \\ \action^*_t \sim \expert_t(.|\state^*_t) \\ \state^*_{t+1} \sim P(. | \state^*_t, \action^*_t)}}{V^*_{t+1}(\state_{t+1})}  \\
    &+ \expectB{\substack{\state^*_t \sim \distStateTime{\expert}{t} \\ \action^*_t \sim \expert_t(.|\state^*_t) \\ \state^*_{t+1} \sim P(. | \state^*_t, \action^*_t)}}{V^*_{t+1}(\state_{t+1})} - \expectB{\substack{\state_t \sim \distStateTime{\policy}{t} \\ \action^*_t \sim \expert_t(.|\state_t) \\ \state_{t+1} \sim P(. | \state_t, \action^*_t)}}{V^*_{t+1}(\state_{t+1})}  \\
    &= \expect{\state_{t+1} \sim \distStateTime{\policy}{t+1}}{V^*_{t+1}(\state_{t+1})} - \expect{\state^*_{t+1} \sim \distStateTime{\expert}{t+1}}{V^*_{t+1}(\state^*_{t+1})}  \\
    &+ \expectB{\substack{\state^*_t \sim \distStateTime{\expert}{t} \\ \action^*_t \sim \expert_t(.|\state^*_t) \\ \state^*_{t+1} \sim P(. | \state^*_t, \action^*_t)}}{V^*_{t+1}(\state_{t+1})} - \expectB{\substack{\state_t \sim \distStateTime{\policy}{t} \\ \action^*_t \sim \expert_t(.|\state_t) \\ \state_{t+1} \sim P(. | \state_t, \action^*_t)}}{V^*_{t+1}(\state_{t+1})}  \\
    & \leq \max_{f \in \F_{t+1}} \expect{\state_{t+1} \sim \distStateTime{\policy}{t+1}}{f(\state_{t+1})} - \expect{\state^*_{t+1} \sim \distStateTime{\expert}{t+1}}{f(\state^*_{t+1})}  \\
    &+ \expectB{\substack{\state^*_t \sim \distStateTime{\expert}{t} \\ \action^*_t \sim \expert_t(.|\state^*_t) \\ \state^*_{t+1} \sim P(. | \state^*_t, \action^*_t)}}{V^*_{t+1}(\state_{t+1})} - \expectB{\substack{\state_t \sim \distStateTime{\policy}{t} \\ \action^*_t \sim \expert_t(.|\state_t) \\ \state_{t+1} \sim P(. | \state_t, \action^*_t)}}{V^*_{t+1}(\state_{t+1})}  \\
\end{aligned}
\end{equation}

The second term is a little tricky to bound since the value function $V^*_{t+1}()$ needs to be pulled back from $t+1$ to $t$ via Bellman operator.  To do this, we borrow the definition of \emph{Inherent Bellman Error (IBE)}~\cite{Sun-2019-118906},
\begin{definition}[Inherent Bellman Error] 
\label{def:ibe}
Let the Bellman Operator on a function $g \in \F_{t+1}$ w.r.t the optimal policy $\expert$ be
\begin{equation}
\B^*_t g(\state_t) \triangleq \expectB{\substack{\action^*_t \sim \expert_t(.|\state_t) \\ \state_{t+1} \sim P(. | \state_t, \action^*_t)}}{g(\state_{t+1})},
\end{equation}
i.e., the pull-back of $g$ from $t+1$ to $t$. This pulled back function may not be in the family of function $\B^*_t g(\state_t) \notin \F_t$. 

We define the Inherent Bellman Error $\ibe$ as the worst-case projection error 
\begin{equation}
\ibe = \max_t \left( \max_{g \in \F_{t+1}} \min_{f \in F_t} \norm{f - \B^*_t g}{\infty} \right)
\end{equation}
\end{definition}

Applying Definition.~\ref{def:ibe}
\begin{equation}
\label{eq:apply_ibe}
\begin{aligned}
    \expectB{\substack{\state^*_t \sim \distStateTime{\expert}{t} \\ \action^*_t \sim \expert_t(.|\state^*_t) \\ \state^*_{t+1} \sim P(. | \state^*_t, \action^*_t)}}{V^*_{t+1}(\state_{t+1})} &\leq \max_{f \in \F_t} \expect{\state^*_t \sim \distStateTime{\expert}{t}}{f(\state^*_t)} + \min_{f' \in F_t} \norm{f' - \B^*_t V^*_{t+1}}{\infty} \\
    &\leq \max_{f \in \F_t} \expect{\state^*_t \sim \distStateTime{\expert}{t}}{f(\state^*_t)} + \ibe \\
\end{aligned}
\end{equation}

Using \eqref{eq:apply_ibe} in \eqref{eq:adv:ipm}
\begin{equation}
\begin{aligned}
    \expect{\state_t \sim \distStateTime{\policy}{t}}{A^{\expert}(\state_t, \policy(\state_t))} 
    & \leq \max_{f \in \F_{t+1}} \expect{\state_{t+1} \sim \distStateTime{\policy}{t+1}}{f(\state_{t+1})} - \expect{\state^*_{t+1} \sim \distStateTime{\expert}{t+1}}{f(\state^*_{t+1})}  \\
    &+ \expectB{\substack{\state^*_t \sim \distStateTime{\expert}{t} \\ \action^*_t \sim \expert_t(.|\state^*_t) \\ \state^*_{t+1} \sim P(. | \state^*_t, \action^*_t)}}{V^*_{t+1}(\state_{t+1})} - \expectB{\substack{\state_t \sim \distStateTime{\policy}{t} \\ \action^*_t \sim \expert_t(.|\state_t) \\ \state_{t+1} \sim P(. | \state_t, \action^*_t)}}{V^*_{t+1}(\state_{t+1})}  \\
    & \leq \max_{f \in \F_{t+1}} \expect{\state_{t+1} \sim \distStateTime{\policy}{t+1}}{f(\state_{t+1})} - \expect{\state^*_{t+1} \sim \distStateTime{\expert}{t+1}}{f(\state^*_{t+1})}  \\
    &+ \max_{f \in \F_t} \expect{\state^*_{t} \sim \distStateTime{\expert}{t}}{f(\state^*_{t})} - \expect{\state_{t} \sim \distStateTime{\policy}{t}}{f(\state_{t})} + 2 \ibe \\
    & \leq \loss(\policy, \dataExp, \distStateTime{\policy}{t+1}) + \loss(\policy, \dataExp, \distStateTime{\policy}{t}) + 2 \ibe \\
    & \leq 2 (\epsilon + \ibe) \\
\end{aligned}
\end{equation}

Using the Performance Difference Lemma
\begin{equation}
\begin{aligned}
J(\policy) &= J(\expert) + \sum_{t=1}^T \expect{\state_t \sim \distStateTime{\policy}{t}}{A^{\expert}(\state_t, \policy(\state_t))} \\
&\leq J(\expert) + \sum_{t=1}^T 2 (\epsilon + \ibe) \\
&\leq J(\expert) + 2 T (\epsilon + \ibe) \\
\end{aligned}
\end{equation}

\end{proof}

\subsection{Proof of Theorem~\ref{thm:alcovipm} (\alcovipm)}
\label{proof:alcovipm}
\begin{proof}

We begin by bounding the expected on-policy dis-advantage of learner w.r.t expert
\begin{equation}
\begin{aligned}
    \expect{\state_t \sim \distStateTime{\policy}{t}}{A^{\expert}(\state_t, \policy(\state_t))}
    & = \expect{\state_t \sim \distStateTime{\expert}{t}}{ \frac{\distState{\policy}(\state_t)}{\distState{\expert}(\state_t)}  A^{\expert}(\state_t, \policy(\state_t))} \\
    & = \expect{\state_t \sim \distStateTime{\expert}{t}}{ \hat{r}(\state_t) A^{\expert}(\state_t, \policy(\state_t))} + \expect{\state_t \sim \distStateTime{\expert}{t}}{ \left(\frac{\distState{\policy}(\state_t)}{\distState{\expert}(\state_t)} - \hat{r}(\state_t)\right)  A^{\expert}(\state_t, \policy(\state_t))} \\
    & \leq \expect{\state_t \sim \distStateTime{\expert}{t}}{ \hat{r}(\state_t) A^{\expert}(\state_t, \policy(\state_t))} + u \; \expect{\state_t \sim \distStateTime{\expert}{t}}{ \left(\frac{\distState{\policy}(\state_t)}{\distState{\expert}(\state_t)} - \hat{r}(\state_t)\right)} \\
    & \leq \expect{\state_t \sim \distStateTime{\expert}{t}}{ \hat{r}(\state_t) A^{\expert}(\state_t, \policy(\state_t))} + u \gamma \\
    & \leq \expect{\state_t \sim \distStateTime{\expert}{t}}{ \hat{r}(\state_t) \left( \expectB{\substack{\action_t \sim \policy(. | \state_t) \\ \state_{t+1} \sim P(. | \state_t, \action_t)}}{V^*_{t+1}(\state_{t+1})} - \expectB{\substack{\action^*_t \sim \expert(. | \state_t) \\ \state^*_{t+1} \sim P(. | \state_t, \action^*_t)}}{V^*_{t+1}(\state^*_{t+1})}\right)} + u \gamma \\
    & \leq \expect{\state_t \sim \distStateTime{\expert}{t}}{ \hat{r}(\state_t) \left( \max_{f \in \F_{t+1}}\expectB{\substack{\action_t \sim \policy(. | \state_t) \\ \state_{t+1} \sim P(. | \state_t, \action_t)}}{f(\state_{t+1})} - \expectB{\substack{\action^*_t \sim \expert(. | \state_t) \\ \state^*_{t+1} \sim P(. | \state_t, \action^*_t)}}{f(\state^*_{t+1})}\right)} + u \gamma \\
    & \leq \loss(\policy, \dataExp, \distStateTime{\policy}{t}) + u \gamma \\
    & \leq \epsilon + u \gamma \\
\end{aligned}
\end{equation}

Using the Performance Difference Lemma
\begin{equation}
\begin{aligned}
J(\policy) &= J(\expert) + \sum_{t=1}^T \expect{\state_t \sim \distStateTime{\policy}{t}}{A^{\expert}(\state_t, \policy(\state_t))} \\
&\leq J(\expert) + \sum_{t=1}^T (\epsilon + u \gamma) \\
&\leq J(\expert) + T (\epsilon + u \gamma) \\
\end{aligned}
\end{equation}

\end{proof}

\subsection{Why Recoverability Matters: Various Recoverability Regimes and Performance Bounds}
\label{sec:recoverability}
\begin{figure*}[ht]
    \centering
    \includegraphics[width=0.8\linewidth]{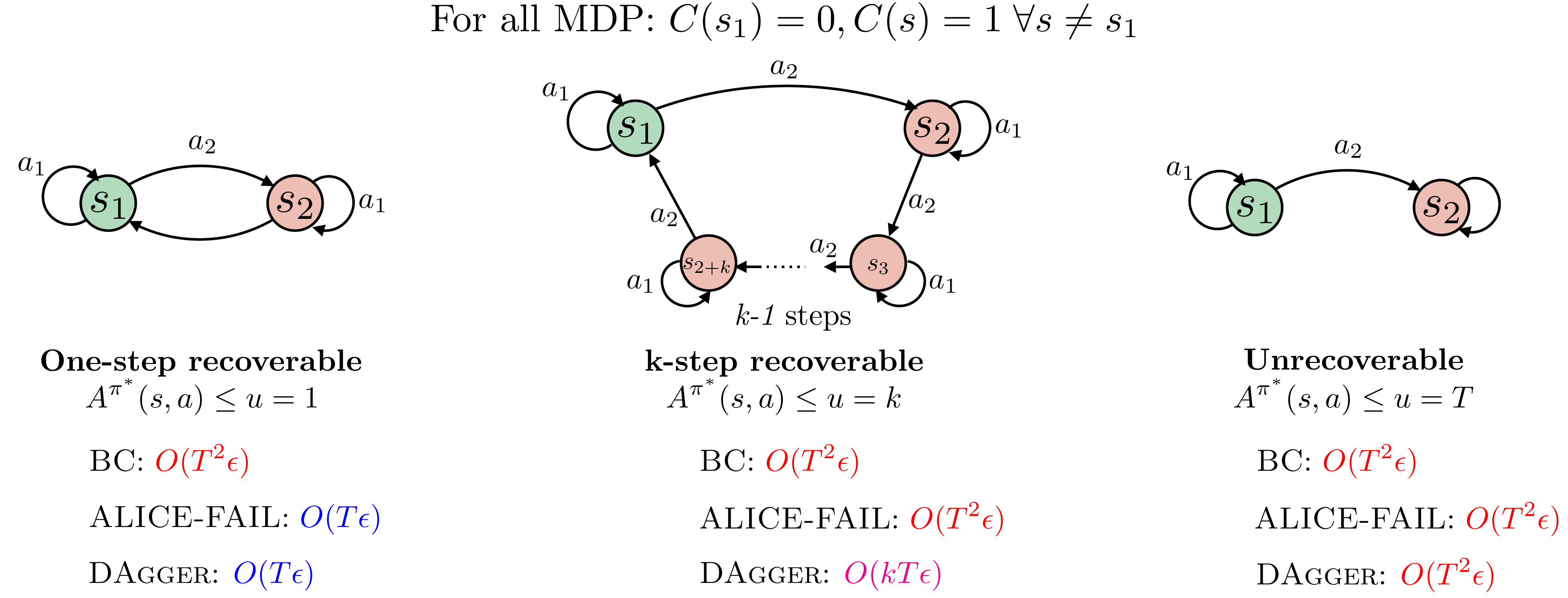}
    \caption{Three different MDPs with varying recoverability regimes. For all MDPs, $C(s_1)=0$ and $C(s) = 1$ for all $s \neq s_1$. The expert deterministic policy is therefore $\expert(s_1)=a_1$ and $\expert(s)=a_2$ for all  $s \neq s_1$.
    Even with one-step recoverability, BC can still result in $O(T^2\epsilon)$ error. For $>1$-step recoverability, even \alipm slides to $O(T^2\epsilon)$, while \DAgger can recover in $k$ steps leading to $O(kT\epsilon)$. For unrecoverable problem, all algorithms can go upto  $O(T^2\epsilon)$. Hence recoverability dictates the lower-bound of how well we can do in the model misspecified regime. }
    \label{fig:recoverability}
\end{figure*}

Here we show that recoverability, in a weaker sense than Def.~\ref{def:recoverability}, is a fundamental requirement for \emph{any IL algorithm}. Consider the class of MDPs defined in Fig.~\ref{fig:recoverability}. They vary in a weaker sense of recoverability, i.e. the upper bound of the advantage $\norm{A^{\expert}(s,a)}{\infty} \leq u$. The first MDP has $u=1$, allowing for one-step recoverability and hence \alipm achieves $O(T\epsilon)$ while BC suffers from compounding error $O(T^2\epsilon)$. The second MDP needs atleast $k-$step to recover, thus both BC and ALICE hit $O(T^2\epsilon)$, while \DAgger can achieve the best bound of $O(kT\epsilon)$. Finally, for unrecoverable MDPs, all algorithms are resigned to $O(T^2\epsilon)$.
\newpage
\subsection{IL Baseline Implementation Specifics}
\label{experiments-expanded}

We use the following settings in training our RL experts (using the RL Baselines Zoo Python package) and IL learners. Learner policy classes consisted of neural networks with relu activation everywhere except final layer and a set number of hidden dimensions (layer 1 dimension, layer 2 dimension, ...). These experiments are all in the stationary setting, where a single policy is trained on all available data and executed for all time-steps.

\begin{table}[h!]
    \centering
    \begin{tabular}{|c|c|}
        \hline
        \textbf{Box2d Discrete OpenAI Gym} & \textbf{MuJoCo Continuous Control} \\
        (CartPole, Acrobot, MountainCar) & (Ant, HalfCheetah, Reacher, Walker, Hopper) \\ \hline
        \underline{RL Expert} & \underline{RL Expert} \\
        Deep Q Network & Soft Actor Critic \\
        100k env. steps & 3M env. steps \\
        exploration fraction = 50\% & learning rate = 0.0003 \\
        final $\epsilon=0.1$ & buffer size = 1M \\ \hline
        \underline{IL Learner} & \underline{IL Learner} \\
        learning rate = 0.001 & learning rate = 0.001 \\
        hidden layer dimensions = (64,) & hidden layer dimensions = (512,512) \\
        100k optimization steps & 20M optimization steps \\ \hline
        
    \end{tabular}
    \caption{Settings used for training RL experts and IL learners}
    \label{tab:my_label}
\end{table}

In some settings, it is natural and useful to augment the current observation $\state_t$ with previous action $\action_{t-1}$, i.e. $\state_t = [\state_t,\action_{t-1}]$. We refer to this addition of single-step action history to BC as "BC+H" and include 

\begin{table}[h!]
\centering
\begin{tabular}{|l|c|c|c|}
    \hline
    \textbf{Environment} & \textbf{Expert} & \textbf{BC} & \textbf{BC+H} \\ \hline 
     CartPole-v1 & $500.0\pm 0.0$ & $500.0\pm 0.0$ & $500.0\pm 0.0$ \\ \hline
     Acrobot-v1  & $-71.7\pm 11.5$ & $-78.4\pm 14.2$ & $-80.0\pm 18.0$ \\ \hline
     MountainCar-v0 & $-99.6\pm 10.9$ & $-107.8\pm 16.4$ & $-105.4\pm 16.6$ \\ \hline
     Ant-v2 & $4185.9\pm 1081.1$ & $3352.9\pm 1800.9$ & $2919.2\pm 1967.9$ \\ \hline
     HalfCheetah-v2 & $4514.6\pm 111.4$ & $4388.4\pm 494.8$ & $4338.1\pm 791.7$ \\ \hline
     Reacher-v2 & $-4.4\pm 1.3$ & $-4.9\pm 2.4$ & $-4.5\pm 1.4$ \\ \hline
     Walker2d & $5496\pm 89$ & $5349\pm 634  $ & $4451\pm 1491$\\ \hline
\end{tabular}
\caption{\textit{Behavioral Cloning} performance on common control benchmarks for 25 expert trajectories. In most cases, BC performs within the expert's error margin and often as well as more sophisticated methods. See appendix for more results and implementation specifics.}
\label{tab:BC_benchmarks}
\end{table}

Although we did not perform experiments for the partially observable setting, please refer to the experiments of \cite{wen2020fighting}. They use a generative adversarial approach to generate an intermediate state representation that removes all information of the previous action. Although their vanilla implementation of BC didn't achieve the strongest results, they include an implementation of their algorithm without the use of an adversary, effectively performing BC with a more principled autoencoder-type model class. Using that improved model class, they show that BC achieves similar reward to DAgger in many of their experiments. 
We also refer the reader to \cite{Sun-2019-118906} for experimental results of Loss \#2 (\FAIL) in the non-stationary policy setting.

\end{document}